\def\BibTeX{{\rm B\kern-.05em{\sc i\kern-.025em b}\kern-.08em
    T\kern-.1667em\lower.7ex\hbox{E}\kern-.125emX}}
\newcommand{\be}{\begin{equation}}
\newcommand{\ee}{\end{equation}}
\newcommand{\sset}[1]{\left\{{#1}\right\}}
\newtheorem{thm}{Theorem}
\begin{document}

\title{Sharp Frequency Bounds for Sample-Based Queries}

\author{\IEEEauthorblockN{Eric Bax}
\IEEEauthorblockA{\textit{Verizon Media} \\
Los Angeles, CA \\
ebax@verizonmedia.com}
\and
\IEEEauthorblockN{John Donald}
\IEEEauthorblockA{\textit{Verizon Media} \\
Los Angeles, CA \\
jdonald@verizonmedia.com}
}

\IEEEpubid{\makebox[\columnwidth]{978-1-7281-0858-2/19/\$31.00~\copyright2019 IEEE \hfill} \hspace{\columnsep}\makebox[\columnwidth]{ }}

\maketitle

\IEEEpubidadjcol

\begin{abstract}
A data sketch algorithm scans a big data set, collecting a small amount of data -- the sketch, which can be used to statistically infer properties of the big data set. Some data sketch algorithms take a fixed-size random sample of a big data set, and use that sample to infer frequencies of items that meet various criteria in the big data set. This paper shows how to statistically infer probably approximately correct (PAC) bounds for those frequencies, efficiently, and precisely enough that the frequency bounds are either sharp or off by only one, which is the best possible result without exact computation.
\end{abstract}

\begin{IEEEkeywords}
big data, sampling, statistics
\end{IEEEkeywords} 

\section{Introduction}
Some distributed database and data sketch algorithms take a fixed-size random sample of a big data set \cite{fan62,vitter84,vitter85,olken86,olken90}, and use that sample to infer estimated frequencies of items that meet various criteria in the big data set. This paper shows how to compute probably approximately correct (PAC) bounds for those frequencies. Such bounds can give approximate query results \cite{olken93,acharya00}, can communicate a range of likely answers while a query is still executing \cite{haas96a,haas96b,hellerstein97}, can be a basis for query planning over very large datasets \cite{olken93}, and can help secure database information \cite{denning80}. Direct computation gives bounds that are sharp (given precise tail probability computation, and within one otherwise), in contrast to previous tail bounds \cite{hoeffding63,chvatal79}, which are useful for proofs because they are smooth and were necessary in practice when computers were less powerful. 

\section{Frequency Bounds from Samples}
Let $n$ be the number of items in a big data set. Let $S$ be a random size-$s$ sample of the items, drawn uniformly at random without replacement. Let $k$ be the number of items in $S$ that meet some condition. Let $m$ be the (unknown) number of items in the big data set that meet the condition. Knowing $n$, $s$, and $k$, we want to infer probably approximately correct (PAC) bounds for $m$. Let $\delta$ be the maximum bound failure probability that we are willing to accept. (For brevity, we state results here without proof, and we use the conventions that ${{i}\choose{j}} = 0$ if $j < 0$, $j > i$, or $i < 0$, ${{0}\choose{0}} = 1$, and $0! = 1$.)

Note that the probability that $k$ of $s$ samples meet the condition, given that $m$ of $n$ big data items meet the condition, has a hypergeometric distribution:
$$ p(n, m, s, k) \equiv {{n}\choose{s}}^{-1} {{m}\choose{k}} {{n - m}\choose{s - k}}. $$
The left tail is the probability that $k$ or fewer samples meet the condition:
$$ L(n, m, s, k) \equiv \sum_{i=0}^{k} p(n, m, s, i). $$
The right tail is the probability that $k$ or more samples meet the condition:
$$ R(n, m, s, k) \equiv \sum_{i = k}^{\min(s, m)} p(n, m, s, i). $$
Then an upper bound for $m$ is 
$$ m_u(n, s, k, \delta) \equiv \max \sset{m | L(n, m, s, k) \geq \delta}, $$
and a lower bound is
$$ m_d(n, s, k, \delta) \equiv \min \sset{m | R(n, m, s, k) \geq \delta}. $$

Each bound has failure probability $\delta$. For $1 - \delta$ confidence that upper and lower bounds both hold, use $\frac{\delta}{2}$ in place of $\delta$ in each bound. For $1 - \delta$ confidence that upper and lower bounds hold simultaneously for $j$ different conditions, use $\frac{\delta}{2j}$. (In the worst case, failures are exclusive, so failure rates sum.) 

\section{Computational Challenge}
For simplicity, focus on the upper bound, $m_u$. To apply the results to the lower bound, note that $ m_d(n, s, k, \delta) = n - m_u(n, s, s - k, \delta).$ One goal is to compute $m_u$ to within one. This is the best possible without exact computation, because if $L(n, m_u, s, k) = \delta$, then any arbitrarily small negative error in computing $L(n, m_u, s, k)$ will disqualify $m_u$ from consideration for the bound. The other goal is to compute the bound in a reasonable time. 

Some notation: let $\hat{L}(n, m, s, k)$ be a computed estimate of $L(n, m_u, s, k)$, and let $\hat{m}$ be a computed estimate of $m_u$. The left tail $L(n, m, s, k)$ is positive and strictly decreasing in $m$ for $0 \leq m \leq n - (s - k)$ and zero for $m > n - (s - k)$. So we can use binary search to compute a $\hat{m}$ value such that $\hat{L}(n, \hat{m}, s, k) \geq \delta$ and $\hat{L}(n, \hat{m} + 1, s, k) < \delta$: 
\begin{itemize}
\item Start with low and high $m$ values set to 0 and $n$, respectively. 
\item Let $m_m$ be an integer between the low and high values (near their average).
\item If $\hat{L}(n, m_m, s, k) \geq \delta$, assign $m_m$ to low. Else assign $m_m$ to high. 
\item Repeat the last two steps until low and high are successive integers. 
\item Return low as $\hat{m}$.
\end{itemize}

The starting low and high values need not be 0 and $n$ -- they can be any values such that the left tail is at least $\delta$ for the low value and less than $\delta$ for the high value. For example, the low value can be $\lfloor \frac{k}{s} n \rfloor$ if $\delta << 0.5$, and the high value can be a Hoeffding bound \cite{hoeffding63}:
$$\min(\lceil n \left(\frac{k}{s} + \sqrt{\frac{\ln \frac{1}{\delta}}{2 s}}\right) \rceil, n).$$

\begin{thm} \label{binsearch_thm}
Together, the following conditions ensure that the $\hat{m}$ computed by binary search is within one of $m_u$:
\begin{itemize}
\item $\forall i > 0: \hat{L}(n, m_u - i, s, k) > \delta$
\item $\forall i > 1: \hat{L}(n, m_u + i, s, k) < \delta$
\item $\hat{L}(n, m_u - 1, s, k) \geq \hat{L}(n, m_u , s, k) \geq \hat{L}(n, m_u + 1, s, k)$
\end{itemize}
\end{thm}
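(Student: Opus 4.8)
The plan is to reduce the claim to two short arguments by contradiction, starting from two facts. First, the fact noted just before the theorem: the binary search terminates at some $\hat{m}$ with $\hat{L}(n, \hat{m}, s, k) \geq \delta$ and $\hat{L}(n, \hat{m} + 1, s, k) < \delta$; this holds because the invariant ``$\hat{L} \geq \delta$ at the current low value, $\hat{L} < \delta$ at the current high value'' is preserved by every update and forces that property on \textit{low} once \textit{high} equals \textit{low} $+ 1$. Second, the defining property of the bound: since the true left tail $L(n, \cdot, s, k)$ is strictly decreasing in $m$ up to $n - (s - k)$ and zero afterward, $m_u$ is the unique integer with $L(n, m_u, s, k) \geq \delta > L(n, m_u + 1, s, k)$. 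Because ``within one of $m_u$'' means $m_u - 1 \leq \hat{m} \leq m_u + 1$, I would prove the two inequalities separately.

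For $\hat{m} \geq m_u - 1$: if instead $\hat{m} \leq m_u - 2$, write $\hat{m} + 1 = m_u - i$ with $i = m_u - \hat{m} - 1 \geq 1$; the first condition of the theorem then gives $\hat{L}(n, \hat{m} + 1, s, k) > \delta$, contradicting $\hat{L}(n, \hat{m} + 1, s, k) < \delta$. For $\hat{m} \leq m_u + 1$: if instead $\hat{m} \geq m_u + 2$, write $\hat{m} = m_u + i$ with $i = \hat{m} - m_u \geq 2 > 1$; the second condition then gives $\hat{L}(n, \hat{m}, s, k) < \delta$, contradicting $\hat{L}(n, \hat{m}, s, k) \geq \delta$. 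Hence $|\hat{m} - m_u| \leq 1$ follows already from the first two conditions.

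The third condition does not strengthen the bound but makes ``the $\hat{m}$'' well defined. The first two conditions say nothing about whether $\hat{L}$ exceeds $\delta$ at $m_u$ or at $m_u + 1$, so a priori more than one index in $\{m_u - 1, m_u, m_u + 1\}$ could be a legal stopping point and the value returned could depend on the midpoints $m_m$. The chain $\hat{L}(n, m_u - 1, s, k) \geq \hat{L}(n, m_u, s, k) \geq \hat{L}(n, m_u + 1, s, k)$ removes the ambiguity: a trichotomy on where $\delta$ falls in the chain shows that exactly one index $m$ in the window has ``$\hat{L}(n, m, s, k) \geq \delta$ and $\hat{L}(n, m + 1, s, k) < \delta$'' -- namely $m_u - 1$, $m_u$, or $m_u + 1$ according as $\hat{L}(n, m_u, s, k) < \delta$, or $\hat{L}(n, m_u, s, k) \geq \delta > \hat{L}(n, m_u + 1, s, k)$, or $\hat{L}(n, m_u + 1, s, k) \geq \delta$. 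I would also check that the loop invariant holds at initialization: for $1 \leq m_u \leq n - 2$, $\hat{L}(n, 0, s, k) \geq \delta$ and $\hat{L}(n, n, s, k) < \delta$ are the first and second conditions with $i = m_u$ and $i = n - m_u$, while the corner cases are covered by $L(n, 0, s, k) = 1$ and, when $k < s$, $L(n, n, s, k) = 0$. The argument is short, so there is no real obstacle; the only subtlety is matching the strictness of each inequality to the hypotheses so the two contradictions are airtight, and making explicit that the third condition is exactly what upgrades ``some value within one'' to a uniquely determined $\hat{m}$.
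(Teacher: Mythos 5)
Your proof is correct and follows essentially the same route as the paper's: the first two conditions make the stopping condition ``$\hat{L}(n,m,s,k) \geq \delta$ and $\hat{L}(n,m+1,s,k) < \delta$'' impossible for any $m \leq m_u - 2$ or $m \geq m_u + 2$, and the third condition makes the stopping point (hence the returned $\hat{m}$) unique. You simply spell out the loop invariant, the two contradiction arguments, and the initialization corner cases in more detail than the paper's terse version does.
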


\begin{proof}
The first two conditions imply that binary search returns as $\hat{m}$ neither $m_u - 2$ or less nor $m_u + 2$ or more, since it makes the condition $\hat{L}(n, m, s, k) \geq \delta$ and $\hat{L}(n, m + 1, s, k) < \delta$ impossible for those $m$ values. Adding the third condition ensures that there is a unique value $\hat{m}$ such that $\forall m \leq \hat{m}: \hat{L}(n, m, s, k) \geq \delta$ and $\forall m \geq \hat{m}: \hat{L}(n, m, s, k) < \delta$. This ensures that the algorithm progresses and returns that $\hat{m}$ value.
\end{proof}

The first two conditions in Theorem \ref{binsearch_thm} imply that estimates of left tails need not be very accurate for $m$ values far from $m_u$. Let 
$$\Delta(m) = L(n, m, s, k) - L(n, m + 1, s, k).$$
The third condition is met if estimates have more accuracy than the gaps between left tails for $m$ values near $m_u$:
$$ \forall m \in \sset{m_u - 1, m_u, m_u + 1}: $$
$$\hat{L}(n, m,  s, k) > L(n, m, s, k) - \frac{\Delta(m - 1)}{2},$$
and 
$$ \hat{L}(n, m,  s, k) < L(n, m, s, k) + \frac{\Delta(m)}{2}. $$
The next theorem indicates the sizes of these gaps.

\begin{thm} \label{gap_thm}
$$ \Delta(m) = p(n, m, s, k) \frac{s - k}{n - m}.$$
\end{thm}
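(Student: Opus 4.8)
The plan is to write $\Delta(m)$ as a sum of term-by-term differences and show that this sum telescopes. Since $L(n,m,s,k) = \sum_{i=0}^{k} p(n,m,s,i)$, we have
$$ \Delta(m) = \sum_{i=0}^{k} \left[ p(n,m,s,i) - p(n,m+1,s,i) \right], $$
so it is enough to express the summand as a first difference in $i$ of something with a clean closed form.

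The key step is the identity
$$ p(n,m,s,i) - p(n,m+1,s,i) = g(i) - g(i-1), \quad \text{where } g(i) \equiv \binom{n}{s}^{-1} \binom{m}{i} \binom{n-m-1}{s-1-i}. $$
To obtain it, I factor out $\binom{n}{s}^{-1}$ and apply Pascal's rule to both $\binom{m+1}{i} = \binom{m}{i} + \binom{m}{i-1}$ and $\binom{n-m}{s-i} = \binom{n-m-1}{s-i} + \binom{n-m-1}{s-i-1}$. The two copies of $\binom{m}{i}\binom{n-m-1}{s-i}$ cancel, leaving $\binom{m}{i}\binom{n-m-1}{s-i-1} - \binom{m}{i-1}\binom{n-m-1}{s-i}$, which is exactly $g(i) - g(i-1)$.

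Summing over $i = 0, \dots, k$ telescopes to $g(k) - g(-1)$, and the convention $\binom{m}{-1} = 0$ kills $g(-1)$, so $\Delta(m) = \binom{n}{s}^{-1} \binom{m}{k} \binom{n-m-1}{s-1-k}$. To finish, I match this with the claimed expression by checking the one-line factorial identity $\binom{n-m-1}{s-1-k} = \frac{s-k}{n-m} \binom{n-m}{s-k}$; substituting it in gives $\Delta(m) = p(n,m,s,k) \frac{s-k}{n-m}$.

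I expect the real work to be bookkeeping rather than insight: lining up the two Pascal expansions so the remainder genuinely reads as $g(i) - g(i-1)$, and checking that the stated conventions for out-of-range binomial coefficients keep the identity valid at the endpoints $i = 0$ and $i = k$ (and in degenerate cases such as $s - 1 - k < 0$ or $k > m$, where the relevant terms vanish on both sides, and $m = n$, where the right-hand side must be read via the same conventions).
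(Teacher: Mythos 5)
Your proof is correct, but it takes a genuinely different route from the paper. You argue algebraically: expand $\Delta(m)$ as a sum of term-by-term differences, apply Pascal's rule to both $\binom{m+1}{i}$ and $\binom{n-m}{s-i}$, observe that the residue is an exact first difference $g(i)-g(i-1)$ with $g(i)=\binom{n}{s}^{-1}\binom{m}{i}\binom{n-m-1}{s-1-i}$, telescope, and convert $\binom{n-m-1}{s-1-k}$ into $\frac{s-k}{n-m}\binom{n-m}{s-k}$. I checked the Pascal cancellation and the final factorial identity; both are right, and the paper's conventions for out-of-range binomial coefficients do kill $g(-1)$ and handle the degenerate cases you flag (the only caveat is $m=n$, where the Pascal expansion of $\binom{n-m}{s-i}$ breaks under the convention $\binom{i}{j}=0$ for $i<0$, but that case is outside the range where $\Delta$ matters). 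The paper instead gives a one-sentence probabilistic coupling argument: realize the $(m+1)$-success population from the $m$-success one by converting a uniformly random failure into a success; under this coupling the sample's success count is monotone and increases by at most one, so $L(n,m,s,k)-L(n,m+1,s,k)$ is exactly the probability that the count was $k$ before the conversion (probability $p(n,m,s,k)$) and the converted element lies in the sample (conditional probability $\frac{s-k}{n-m}$, since $s-k$ of the $n-m$ failures are sampled). The coupling proof is shorter and explains \emph{why} the gap has this product form; your telescoping proof is more mechanical but entirely self-contained, needs no probabilistic construction, and makes the boundary conventions explicit, which is arguably safer for a statement that is then used to calibrate numerical precision at the boundary.
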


\begin{proof}
The difference between left tails for $m$ and for $m + 1$ is the probability that converting a random "failure" in the population into a "success" causes a sample with $k$ or fewer successes to have $k + 1$ or more. This requires the sample to have $k$ successes before the conversion, which has probability $p(n, m, s, k)$, and, given that, the converted element must be a sample, which has probability $\frac{s - k}{n - m}$.
\end{proof}

The left tail $L(n, m, s, k)$ has $k + 1$ terms, and around $m_u$ they sum to approximately $\delta$, with the rightmost term $p(n, m, s, k)$ the largest. So 
$$p(n, m, s, k) \approx \frac{\delta}{k + 1}$$ 
is a conservative estimate. That gives
$$\Delta(m) \approx \frac{\delta}{k + 1} \frac{s - k}{n - m} \geq \frac{\delta}{nk}.$$
So left tail estimates with error less than $\frac{\delta}{nk}$ should yield $\hat{m}$ within one of $m_u$. For example, with $\delta = 0.01$, $k$ ten million, and $n$ one trillion, we would like about $2 + 7 + 12 = 21$ digits of accuracy. For reference, most platforms map Python floating point numbers to the IEEE-754 doubles, which have about 16 digits (53 bits) of precision. 

\section{Methods of Computation}
To compute $m_u$ within one, we will need to do some combination of:
\begin{itemize}
\item Limit $k$, $n$, and $\frac{1}{\delta}$.
\item Use higher-precision arithmetic than for standard doubles.
\item Use numerical methods that avoid loss of precision.
\end{itemize}

On the first point, for big data, population sizes $n$ can be in the billions or on the order of a trillion, and sample sizes $s$ tend to be in the millions, so $k$ can be of the same order. Often, for 95\% confidence, $\delta = 5\%$, but $\delta$ values can be one or two orders of magnitude smaller, to achieve higher confidence or to have reasonable confidence in simultaneous estimates of multiple frequencies. Packages such as Python's stat.hypergeom offer about 5 digits of accuracy in estimating the cdf of the hypergeometric distribution -- not enough to differentiate between left tails that differ by $\frac{\delta}{nk}$ for $n$ more than a million. So it is important to explore the other two points.

On the second point, Python offers a Decimal class that allows programmers to select the level of precision for arithmetic operations. It is easy to use, and using it does not cause infeasible slowing, at least not for precision up to a few hundred digits. For the computation methods that we describe next, setting precision to 30 digits gives sufficient accuracy to produce $\hat{m}$ within one of $m_u$ for $\delta = 0.05$, $n$ one trillion, and $k$ nine million. 

On the third point, we have had success with two different methods of computing the left tail: one is computing terms $p(n, m, s, j)$for $0 \leq j \leq k$ using combinatorial identities and ordering computations to reduce loss of precision, and the other is estimating terms using Stirling's approximation. In both cases, we ignore very small tail terms, which may increase error but reduces computation. 

For the first method, recall that:
$$ p(n, m, s, j) \equiv {{n}\choose{s}}^{-1} {{m}\choose{j}} {{n - m}\choose{s - j}}. $$
Define
$$ T(h, j) \equiv \prod_{i = 0}^{j - 1} (h - i). $$
Since 
$$ T(h, j) = \frac{h!}{(h - j)!},$$
$$ {{h}\choose{j}} = \frac{T(h, j)}{j!}.$$
Then
$$ p(n, m, s, j) = \frac{T(m, j) T(n - m, s - j) s!}{j! (s - j)! T(n, s)} $$
$$ = \frac{T(m, j) T(n - m, s - j)}{T(n, s)} {{s}\choose{j}}$$
$$ = \frac{T(m, j) T(n - m, s - j) T(s, j)}{j! T(n, s)}.$$
The numerator and denominator each have $s + j$ terms, with some terms as large as $n - m$ or $n$, making both huge. To avoid creating huge numbers or floating point underflow/overflow problems, use Loader's \cite{loader10} method:
\begin{itemize}
\item Start with a list (or iterator) of numerator terms, and one of denominator terms.
\item Assign $v = 1$.
\item If $v < 1$ and there are more numerator terms, multiply $v$ by one of them, and remove it.
\item If $v > 1$ and there are more denominator terms, divide $v$ by one of them, and remove it.
\item If there are more terms in either list, repeat.
\item Return $v$.
\end{itemize}

Use that method for the largest term in the tail. For smaller terms, note that
$$p(n, m, s, j + 1) / p(n, m, s, j) = \frac{(m - j)(s - j)}{(j + 1)(n - m - s + j + 1)},$$
and multiply or divide by that ratio to compute successive terms. This reduces computation.

For the second method, use a version of Stirling's approximation:
$$ \ln n! \approx n \ln n - n + \frac{1}{2} \ln(2 \pi n) + \frac{1}{12n} - \frac{1}{360n^3} + \frac{1}{1260 n^5} - \frac{1}{1680n^7}. $$
(Compute the terms in reverse order to avoid losing the smaller terms to roundoff.) Let $A(n)$ be the RHS. Then
$$ \ln T(h, j) = A(h) - A(h - j), $$
so
$$ \ln p(n, m, s, j) = $$
$$ A(m) - A(m - j) + A(n - m) - A(n - m - s + j) + A(s) - A(s - j)$$
$$ - A(j) - A(n) + A(n - s), $$
and we can compute the $p(n, m, s, j)$ by exponentiating the RHS. As before, do this for the largest term. For the other terms, use
$$ \ln p(n, m, s, j + 1) - \ln p(n, m, s, j) =$$
$$  \ln(m - j) + \ln(s - j) - \ln(j + 1) - \ln(n - m - s + j + 1). $$

Both the direct computation method and the Stirling's approximation method have been tested for a variety of inputs. For $\delta = 0.05$, $n$ one trillion, $s$ ten million, and $k$ nine million, the gaps between tails for $m$ values near $m_u$ are on the order of one in three billion. Both methods return $\hat{m} = 900\,156\,008\,220$ and compute the corresponding lower bound to be $899\,843\,820\,749$. This is for normal Python floats for direct computation (no need for Decimal with higher precision) and with precision set to 30 for the Stirling's approximation method.

For the direct computation method, it is O($s$) to compute the largest term of the tail, O($1$) for each of the other $k$ terms, and there are O($\lg n$) tail computations for the binary search. So the entire computation uses O($s \lg n$) time. Using Stirling's approximation, each of the $k$ tail terms, including the largest, requires O($1$) time. So the overall computation requires O($k \lg n$) time. For $n$ one trillion, neither computation is instant; they require a few minutes on an old iMac. 

\end{document}